\documentclass[preprint,12pt]{elsarticle}



\usepackage{graphicx}
\usepackage{amssymb}
\usepackage{amsthm}

\usepackage{bbm} 

\usepackage{algorithm}
\usepackage{algpseudocode}

\usepackage{lineno}


\graphicspath{
		{./figures/}
        }

\usepackage{prettyref}
\usepackage{amsmath}
\usepackage{xcolor}

\newtheorem{lemma}{Lemma}

\newrefformat{fig}{Figure \ref{#1}}
\newrefformat{tab}{Table \ref{#1}}
\newrefformat{eq}{Eq. (\ref{#1})}
\newrefformat{app}{\ref{#1}}
\newrefformat{alg}{Algorithm \ref{#1}}
\newrefformat{sec}{Section \ref{#1}}
\newrefformat{lemma}{Lemma \ref{#1}}
\newrefformat{theorem}{Theorem \ref{#1}}
\newrefformat{assumption}{Assumption \ref{#1}}

\newcommand{\m}{\mathbf{m}}
\newcommand{\w}{\mathbf{w}}
\newcommand{\x}{\mathbf{x}}
\newcommand{\y}{\mathbf{y}}
\newcommand{\B}{\mathbf{B}}
\newcommand{\Bcv}{\mathbf{\dot{B}}}

\newcommand{\D}{\mathbf{D}}
\newcommand{\Dcv}{\mathbf{\dot{D}}}

\newcommand{\Ecv}{\mathbf{\dot{E}}}
\renewcommand{\H}{\mathbf{H}}
\newcommand{\I}{\mathbf{I}}
\newcommand{\R}{\mathbb{R}}
\renewcommand{\S}{\mathbf{S}}
\newcommand{\Th}{\boldsymbol{\Theta}}
\newcommand{\Thcv}{\boldsymbol{\dot{\Theta}}}

\newcommand{\W}{\mathbf{W}}
\newcommand{\X}{\mathbf{X}}
\newcommand{\Y}{\mathbf{Y}}
\newcommand{\Yhat}{\mathbf{\hat{Y}}}
\newcommand{\Ycv}{\mathbf{\dot{Y}}}
\newcommand{\Ylda}{\mathbf{\breve{Y}}}  

\newcommand{\Te}{\text{Te}}
\newcommand{\Tr}{\text{Tr}}

\newcommand{\Beta}{\boldsymbol{\beta}}
\newcommand{\Betahat}{\boldsymbol{\hat{\beta}}}
\newcommand{\Betacv}{\boldsymbol{\dot{\beta}}}
\newcommand{\ehat}{\mathbf{\hat{e}}} 
\newcommand{\ecv}{\mathbf{\dot{e}}} 
\newcommand{\Hte}{\H_\Te} 
\newcommand{\Htrte}{\H_{\Tr,\Te}} 
\newcommand{\mbar}{\overline{\m}}
\newcommand{\mm}[1]{\m_{#1}}   
\newcommand{\Xate}{\Xa_{\text{Te}}} 
\newcommand{\yhat}{\mathbf{\hat{y}}} 
\newcommand{\ycv}{\mathbf{\dot{y}}} 
\newcommand{\yp}{\y^\sigma} 
\newcommand{\Xa}{\widetilde{\X}} 

\newcommand{\one}{\mathbbm{1}}

\newcommand{\bigO}[1]{$\mathcal{O}({#1})$}

\journal{Journal Name}

\begin{document}

\begin{frontmatter}



\title{Cross-validation in high-dimensional spaces: a lifeline for least-squares models and multi-class LDA}






\author{Matthias S. Treder}

\address{Cardiff University Brain Research Imaging Centre (CUBRIC), Cardiff University, United Kingdom}

\begin{abstract}
Least-squares models such as linear regression and  Linear Discriminant Analysis (LDA) are amongst the most popular statistical learning techniques. However, since their computation time increases cubically with the number of features, they are inefficient in high-dimensional neuroimaging datasets.
Fortunately, for k-fold cross-validation, an analytical approach has been developed that yields the exact cross-validated predictions in least-squares models without explicitly training the model. Its computation time grows with the number of test samples.
Here, this approach is systematically investigated in the context of cross-validation and permutation testing. LDA is used exemplarily but results hold for all other least-squares methods. Furthermore, a non-trivial extension to multi-class LDA is formally derived. The analytical approach is evaluated using complexity calculations, simulations, and permutation testing of an EEG/MEG dataset.
Depending on the ratio between features and samples, the analytical approach is up to 10,000x faster than the standard approach (retraining the model on each training set). This allows for a fast cross-validation of least-squares models and multi-class LDA in high-dimensional data, with obvious applications in multi-dimensional datasets, Representational Similarity Analysis, and permutation testing.
\end{abstract}

\begin{keyword}
MVPA \sep classification \sep cross-validation \sep permutation testing \sep LDA \sep high-dimensional spaces \sep RSA


\end{keyword}

\end{frontmatter}


\section{Introduction}
Multivariate pattern analysis (MVPA) is a statistical technique in which a target variable such as a brain state or reaction time is predicted based on multivariate patterns of brain activity \cite{Mur2009}. 
The spadework in MVPA is performed by regression models if the dependent variables is continuous (e.g. reaction time), or by classifiers if the dependent variable is categorical (e.g. stimulus type) \cite{Hastie2009}. 
Due to their simplicity, relatively low computational demands, and high interpretability, least-squares models have been popular for both regression problems (linear regression, ridge regression) and for classification problems (Linear Discriminant Analysis  \cite{Fisher1936}).

The increase in storage capabilities, working memory, and computational power, and the increasing availability of high-performance compute clusters paved the way for large-scale analyses of neuroimaging data. Analyses can deal with larger throughput than ever before, such as higher field strengths in fMRI and larger number of electrodes in EEG, or simply a larger amount of derived features such as time-frequency and connectivity metrics. It is worth stressing that most neuroimaging datasets have a $P \gg N$ shape, that is the number of features P is much larger than the number of samples N. An extreme example of this is gene expression data comprising tens of thousands of genes (features) but not more than a few hundred patients (samples) \cite{Clarke2008TheData.,Wang2008ApproachesMicroarrays}. In cognitive neuroscience, the number of samples for an analysis is naturally capped by limits of experiment time and group size. For level 1 analyses, the number of trials is limited by the amount of time the subject can spend in the scanner. Even in fast-paced EEG/MEG experiments, it is very rare that more than 10,000 trials are collected. For level 2 analyses, sample size is equal to the number of subjects, which is typically less than a few hundred. 
Summarising, the principal challenge in large neuroimaging datasets is to efficiently cope with high-dimensional data.

Unfortunately, this is exactly the Achilles heel of least-squares methods (LSM) such as linear regression, ridge regression, and Linear Discriminant Analysis (LDA). The  computationally most expensive part in LSM is the inversion of the features $\times$ features scatter matrix. Computation time increases cubically with the number of features. It can therefore be intractable for even a few thousand features if a large number of training-testing iterations is needed, such as in permutation testing or in Representational Similarity Analysis \cite{Kriegeskorte2008} with many experimental conditions. This is one of the reasons that some researchers explore kernel methods such as Support Vector Machines \cite{Cortes1995Support-VectorNetworks} whose complexity grows with the number of samples rather than number of features.

Does this mean that, for all practical purposes, high-dimensional datasets are beyond reach for LSM? Fortunately, for cross-validation \cite{Lemm2011}, an alternative has been developed that addresses this issue. The analytical approach for LSM has the following  \cite{Lemm2011}useful property: instead of requiring the inversion of a feature$\times$ features scatter matrix on each training set, it instead relies on the inversion of a matrix that grows with the number of test samples. It is therefore only mildly affected by the number of features. 
For leave-one-out cross-validation, the analytical approach is well-known in the linear regression literature
\cite{Cawley2003EfficientClassifiers,Cook1982ResidualsRegression,James2013}, and it has been generalised to k-fold cross-validation \cite{Pahikkala2006FastLeast-Squares,Rao2008OnEvaluation}. 

The aim of this study is to show that LSM can successfully meet the challenges of high-dimensional data when using the analytical approach. Because of the formal equivalence between linear regression and LDA (resp. ridge regression and regularised LDA), it suffices to focus on LDA alone. All results automatically generalise to linear regression and ridge regression. Equipped with regularisation techniques such as ridge regularisation \cite{Friedman1989} or shrinkage regularisation \cite{Blankertz2011}, LDA is robust to overfitting in high-dimensional data. It often performs similarly to more sophisticated linear classifiers such as linear support vector machines (SVM) while being significantly faster to train \cite{Li2006UsingInvestigation}. 

The novel contributions in this manuscript are a detailed empirical evaluation of the analytical approach for cross-validation using simulations and complexity calculations. Furthermore, to the best of my knowledge, this is the first application in permutation testing which is a popular approach in statistical testing of classifier performance \cite{Allefeld2016ValidInference,Stelzer2013,Ojala2010,Salzberg1997,Jamalabadi2016}. It is also the first time that the approach is formally extended to multi-class LDA using an optimal scoring approach \cite{Hastie1995PenalizedAnalysis}.

The manuscript is structured as follows. Firstly, cross-validation, binary LDA and a regression formulation of LDA are introduced. Then  the analytical approach is developed for cross-validation and permutation testing, and both ridge regularisation and shrinkage regularisation are considered. Finally, it is formally extended to multi-class LDA \cite{Rao1948TheClassification,Hastie1995PenalizedAnalysis}. The computation time of the approach is then compared to the computation time of the standard approach (retraining the model for every fold) using complexity calculations, simulations, and a permutation analysis of a publicly available EEG/MEG dataset \cite{Wakeman2015ADataset}.

\section{Method}

Matrices will denoted by bold upper case letters, for instance $\X$. Vectors are denoted as bold lower case letters, $\x$, and are assumed to be column vectors. For scalars, normal font type is used. Upper case scalars are used for specifying the dimensionality of the matrices or vectors. $N$ is the number of samples, $P$ the number of features or predictors, $K$ is the number of cross-validation folds, $C$ is the number of classes, and $T$ is the total number of training-testing iterations in during permutation testing.

\subsection{Cross-validation}

Cross-validation allows to estimate predictive performance while at the same time controlling for overfitting and making efficient use of the available samples \cite{Lemm2011,Jamalabadi2016,Hastie2009}.
In k-fold cross-validation, the dataset is randomly partitioned into K equally sized folds. The classifier is trained on all but one of the folds, and then tested on the held out fold. This procedure is repeated until every fold served as test set once. Classification performance is then averaged across the test folds. To reduce the variance stemming from the random partitioning of data into folds, the cross-validation can be repeated several times, finally averaging across the repeats. 

\subsection{Linear Discriminant Analysis (LDA)}

For two classes, LDA is equivalent to Fisher Discriminant Analysis (FDA) \cite{Fisher1936,Blankertz2011,Bishop2007,Duda1998}. The multi-class case is considered further below. Geometrically speaking, LDA seeks a projection $\w$ from feature space to a 1-dimensional subspace such that the projected class means are maximally separated while at the same time the projected variance within classes is minimised \cite{Fisher1936,Blankertz2011,Bishop2007,Duda1998}. Using the LDA derivation in Duda \& Hart \cite{Duda1998} this can be formalised as:

\begin{align*}
\begin{split}
J(\w) = \frac{\w^\top\S_b\,\w}{\w^\top\S_w\,\w}
\end{split}
\end{align*}

where $\S_b\in\R^{P\times P}$ is the between-classes scatter matrix measuring the distance between the classes and $\S_w\in\R^{P\times P}$ is the within-class scatter matrix measuring the spread within each class. These quantities are defined as

\begin{equation*}
\begin{alignedat}{2}
\S_b =\ & \sum_{l\,\in\{1,2\}}N_l\,(\mm{l} -\mbar) (\mm{l} - \mbar)^\top\ \quad &&\text{(between-classes scatter)}\\
\S_w =\ & \sum_{l\,\in\{1,2\}}\sum_{i\in\mathcal{C}_l} (\x_i - \mm{l})(\x_i - \mm{l})^\top\  \quad &&\text{(within-class scatter)}
\end{alignedat}
\end{equation*}

where $\mathcal{C}_l$ is an index set representing the samples in class $l$, $N_l$ is the number of samples in class $l$, and the means are given by

\begin{equation}
\label{eq:means}
\begin{alignedat}{2}
\mm{l} =&\ \frac{1}{N_l}\sum_{i\in\mathcal{C}_l} \x_i \quad &&\text{(class mean)}\\
\mbar =&\ \frac{1}{N}\sum_{i\in\{1,2,...,N\}} \x_i \quad &&\text{(sample mean)}
\end{alignedat}
\end{equation}

Obviously, the sample mean and the two class means are related by $N\,\mbar = N_1\mm{1} + N_2\mm{2}$.
In the two-classes case, $\S_b$ further simplifies to

\begin{equation}
\label{eq:Sb-simple}
\S_b =\ \frac{N_1 N_2}{N}\ (\mm{1}-\mm{2})\,(\mm{1}-\mm{2})^\top \quad \text{(between-classes scatter)}
\end{equation}

Setting $\lambda = J(\w)$ one arrives at the generalised eigenvalue problem $\S_b\,\w = \lambda\,\S_w\,\w$. For a binary classification problem and a positive definite within-class scatter matrix, the eigenvector corresponding to the largest eigenvalue is proportional to

\begin{equation}
\label{eq:w}
\w = \S_w^{-1}\ (\mm{1} - \mm{2})\quad \text{(weight vector)}
\end{equation}

This is proved in \prettyref{lem:evproblem} in \prettyref{app:proofs}. It is expedient to define the bias as the center between the projected class means because this prevents the classifier from being biased towards one of the classes if the number of training samples per class is not equal:

\begin{equation}
\label{eq:b}
b_\text{LDA} = -\w^\top (\mm{1} - \mm{2})/2\quad \text{(bias term)}
\end{equation}

The classifier output $\hat{y}$ for a new sample $\x$ is calculated as $\hat{y} := \w^\top\x + b$. This quantity is the signed distance to the hyperplane, more generally known as \textit{decision value}. It is these decision values that are subject to the analytical approach developed below. The class labels can be derived from the sign of the decision value, with class "$+1$" for $\hat{y}\ge 0$ and class "$-1$" for $\hat{y}<0$ \cite{Bishop2007}. 

\subsection{Binary LDA as a least-squares problem}

\begin{figure}
\centering\includegraphics[width=1\linewidth]{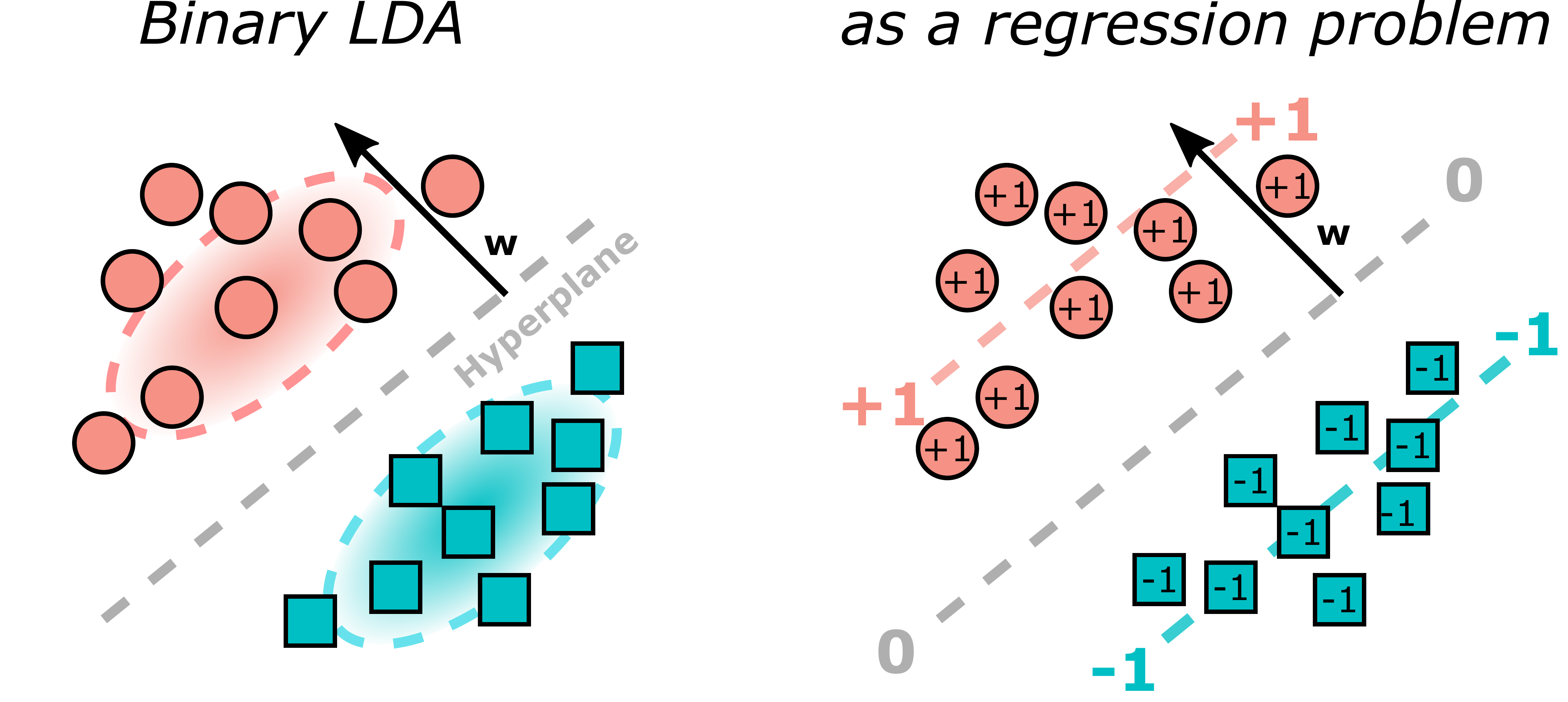}
\caption{Two equivalent perspectives on binary LDA. \textit{Left}: Classical view of LDA as a classification problem. Class distributions are modelled as multivariate Gaussian densities (indicated by the ellipses and shaded areas) and $\w$ is the normal to the optimal separating hyperplane. \textit{Right}: LDA can be framed as a regression problem by coding each class by a number (e.g. +1 and -1) and then performing linear regression using the features as predictors and class labels as response variable. Both approaches yield the same $\w$ (up to scaling).}
\label{fig:LDA_as_regression}
\end{figure}

There are several equivalent formulations of LDA.
For two classes, LDA is formally equivalent to LCMV beamforming \cite{Treder2016,vanVliet2016Single-TrialBeamformer,vanVliet2017ExploringPotentials}. Furthermore, as illustrated in \prettyref{fig:LDA_as_regression}, binary LDA can be cast as a least-squares regression problem \cite{Bishop2007,Duda1998,Mika2002KernelDiscriminants,Zhang2010RegularizedBeyond}. Let $\X\in\R^{N\times P}$ be the data matrix containing samples as rows and features as columns. Then  $\Xa\in\R^{N\times (P+1)}$ is the augmented data matrix  obtained by adding a column of 1's (for the bias term) and $\boldsymbol{\beta}$ is the regression weights vector absorbing both $\w$ and b 

\begin{align*}
\begin{split}
\Xa = [\X,\ \mathbbm{1}_N]\in\R^{N\times (P+1)}, \quad \boldsymbol{\beta} = \begin{pmatrix}\w\\ b_\text{LR}\end{pmatrix}\in\R^{P+1}
\end{split}
\end{align*}

where $\one_N$ is a vector of N ones. The bias term is denoted as $b_\text{LR}$ since it is generally different from the LDA bias term $b_\text{LDA}$. The class labels are collected in a response vector $\y\in\R^N$ that uses numerical codes (e.g. $+1$ and $-1$) for the class labels. The standard regression problem using the full dataset can then be formulated as

\begin{align}
\label{eq:LDA-regression-problem}
\hat{\Beta} = \underset{\Beta}{\text{arg min}}\ ||\Xa\,\Beta - \y||_2^2
\end{align}

with the solution given by $\hat{\Beta} = (\Xa^\top\Xa)^{-1}\ \Xa^\top\, \y$. 
Following the derivation in \prettyref{app:app-regression-binary-fda}, and assuming that the class labels are coded as $+1$ and $-1$, it can be shown that $\hat{\Beta}$ consists of the two components

\begin{align}
\begin{split}
\w\ \propto &\ \S_w^{-1}\ (\m_1 - \m_2)\\
b_\text{LR} =&\ \frac{N_1 - N_2}{N} - \mbar^\top\w
\end{split}
\end{align}

In other words, the solution for $\w$ using linear regression is proportional to the LDA solution given in \prettyref{eq:w}. Since scaling does not affect classification performance, one can say that the solutions are identical. Unless the classes have equal proportions of samples ($N_1=N_2$), the bias term $b_\text{LR}$ differs from the common choice presented in \prettyref{eq:b}. However, class proportions and the exact numerical coding of the classes in $\y$ do not affect the direction of $\w$. This is shown in \prettyref{app:app-regression-binary-fda}.





\subsection{An analytical approach to cross-validation for least-squares methods}

\begin{figure}
\centering\includegraphics[width=1\linewidth]{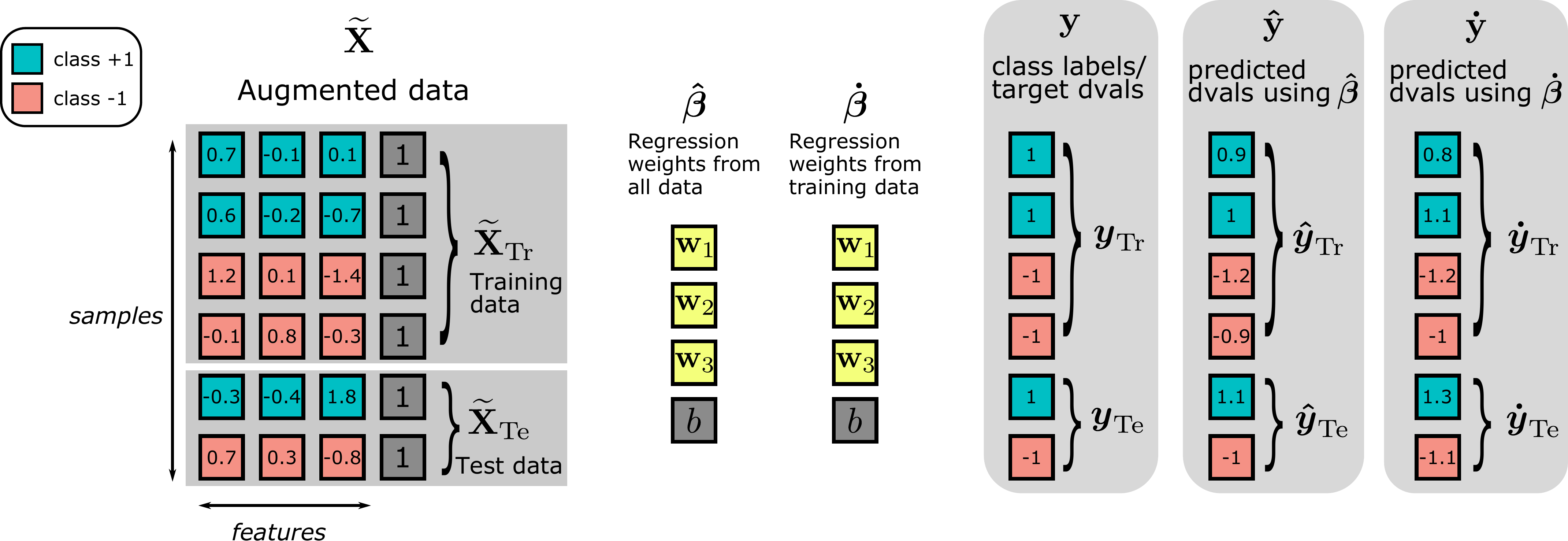}
\caption{Visual depiction of important variables used in the derivation. Decision values are abbreviated as 'dvals'.}
\label{fig:symbol_explanation}
\end{figure}

In this section, the analytical approach to cross-validation is introduced for binary LDA. For linear regression and ridge regression, the approach is identical, with $\y$ simply being continuous a response variable instead of a vector of class labels. The approach has been introduced before for leave-one-out cross-validation \cite{Cawley2003EfficientClassifiers,Cook1982ResidualsRegression,James2013} and k-fold cross-validation \cite{Pahikkala2006FastLeast-Squares,Rao2008OnEvaluation} but without the detailed derivation provided here. 

Let Tr $\subset\{1,2,\dots,N\}$ be the indices of the training samples, and Te $\subset\{1,2,\dots,N\}$ be the indices of the test samples.
Let $\y\in\{-1,+1\}^N$ be the vector of class labels. In the regression framework, these class labels serve as target decision values. The decision values obtained from the classifier trained on the full dataset are denoted as $\yhat$. The cross-validated decision values obtained from a classifier trained on only the training set and then tested on the independent test set are denoted as $\ycv$. $\X_\Tr$ resp. $\X_\Te$, and $\y_\Tr$ resp. $\y_\Te$ refer to the submatrix or subvector corresponding the training resp. test samples. To ease reading of the formulas, some important quantities used in the derivations are depicted in \prettyref{fig:symbol_explanation}.

\subsubsection{Basic idea}

In the regression framework, training the classifier is equivalent to calculating the vector of regression weights $\Betacv$ from the training data

\begin{align}
\label{eq:Beta-train}
\Betacv = (\Xa_\Tr^\top\,\Xa_\Tr)^{-1}\ \Xa_\Tr^\top\, \y_\Tr \quad\text{(model based on training data)}
\end{align}

In k-fold cross-validation, this process is repeated for each of the K training folds. However, as will be shown next, it suffices to train only \textit{one} model using all available data

\begin{align*}
\Betahat = (\Xa^\top\,\Xa)^{-1}\ \Xa^\top\, \y\quad\text{(model based on all data)}
\end{align*}

and then obtain the cross-validated decision values directly via an analytical approach.

\subsubsection{Hat matrix}

The hat matrix $\H\in\R^{N\times N}$ is defined as

\begin{align}
\label{eq:hat-matrix}
\H = \Xa\ (\Xa^\top\Xa)^{-1}\ \Xa^\top \quad \text{(hat matrix)}
\end{align}

It is a quantity well-known in linear regression \cite{Hoaglin1978TheANOVA}. Its name stems from the fact that it "puts the hat" onto the response vector $\y$ by mapping the true responses onto the predicted responses $\hat{\y} = \H\y$. The submatrix $\Hte$ is obtained from $\H$ by selecting only the rows and columns that correspond to test samples. $\Htrte$ is obtained from $\H$ by selecting the rows corresponding to training samples and the columns corresponding to test samples.
As will be seen below, the hat matrix arises naturally during updating.

\subsubsection{Updating $\Xa_\Tr^\top\,\y_\Tr$}

A formula that will prove useful later on is the product $\Xa_\Tr^\top\,\y_\Tr$ which can be obtained as follows

\begin{align}
\label{eq:update-Xy}
\Xa_\Tr^\top\,\y_\Tr = \Xa^\top\y - \Xate^\top\,\y_\Te.
\end{align}

\subsubsection{Updating the inverse scatter matrix}

Similarly, the scatter matrix on the training data can be obtained from the full scatter matrix by removing the scatter corresponding to the test samples, 

\begin{align}
\label{eq:update-scatter}
\Xa_\Tr^\top\,\Xa_\Tr = \Xa^\top\,\Xa - \Xate^\top\,\Xate.
\end{align}

Suppose that $\Xa^\top\,\Xa$ and its inverse, denoted as $\S := (\Xa^\top\,\Xa)^{-1}$ have already been calculated and let $\I$ be the identity matrix. The matrix inversion lemma (a.k.a. Sherman-Morrison-Woodbury formula) can be used to update the inverse scatter matrix on the training data as

\begin{align}
\begin{split}
\label{eq:update-inverse}
(\Xa_\Tr^\top\,\Xa_\Tr)^{-1} =&\ (\Xa^\top\,\Xa - \Xate^\top\,\Xate)^{-1}\\
=&\ \S + \S\ \Xate^\top\,(\I - \Xate\,\S\,\Xate^\top)^{-1}\Xate\,\S\\
=&\ \S + \S\ \Xate^\top\,(\I - \Hte)^{-1}\Xate\,\S
\end{split}
\end{align}

This solution circumvents the explicit inversion of the scatter matrix, but it still involves a number of matrix multiplications. It therefore only serves as an intermediate result.

\subsubsection{Updating the weights}

To calculate the weights on the training data, \prettyref{eq:update-Xy} and \prettyref{eq:update-inverse}  can be plugged into \prettyref{eq:Beta-train}. This yields

\begin{align}
\begin{split}
\label{eq:update-beta}
\Betacv =&\ \left(\S + \S\ \Xate^\top\,(\I - \Hte)^{-1}\Xate\,\S\right)\ (\Xa^\top\y - \Xa_\Te^\top\,\y_\Te)\\
=&\ \Betahat - \left(\S\ \Xate^\top\,(\I - \Hte)^{-1}\right)\ \left(\,[\I-\Hte]\,\y_\Te - \Xate\,\Betahat + \Hte\,\y_\Te\right)\\
=&\ \Betahat - \left(\S\ \Xate^\top\,(\I - \Hte)^{-1}\right)\ \left(\,\y_\Te - \Xate\,\Betahat\right)\\
=&\ \Betahat - \left(\S\ \Xate^\top\,(\I - \Hte)^{-1}\right)\ \left(\,\y_\Te - \yhat_\Te \right)\\
\end{split}
\end{align}

where $\ehat_\Te  := \y_\Te - \yhat_\Te$ is the estimation error on the test samples between the correct and the predicted decision values using a model trained on the full dataset.
As will be seen next, $\Betacv$ does not need to be calculated explicitly. 

\subsubsection{Updating the decision values}

The goal is to derive the cross-validated decision values for the test samples denoted as $\ycv_\Te$. As an intermediate step, the corresponding estimation error is calculated first.

\begin{align}
\begin{split}
\label{eq:ecvte}
\ecv_\Te= \y_\Te - \ycv_\Te\quad\text{(cross-validated estimation error)}
\end{split}
\end{align}

Inserting \prettyref{eq:update-beta} then leads to the desired analytical approach

\begin{align}
\begin{split}
\label{eq:update-rule}
\ecv_\Te =&\ \y_\Te - \Xa_\Te\,\Betacv\\
=&\ \underbrace{\y_\Te - \yhat_\Te}_{=\ehat_\Te} + \Hte (\I - \Hte)^{-1}\,\ehat_\Te\\
=&\ (\I - \Hte + \Hte)\,(\I - \Hte)^{-1}\,\ehat_\Te\\
=&\ (\I - \Hte)^{-1}\,\ehat_\Te \quad\quad\text{(analytical approach)}
\end{split}
\end{align}

It is now easy to obtain the cross-validated decision values on the test set by simply solving \prettyref{eq:ecvte} for $\ycv_\Te$. Finally, these decision values can be used to calculate classification accuracy, AUC, or any other desired metric of classification performance.

\subsection{Adjusting the bias term}

The bias term resulting from the regression approach does not generally coincide with the bias term used in LDA. However, the bias terms to coincide if $N_1 = N_2$. Hence, for unbalanced data, undersampling of the majority class or oversampling of the minority class is a remedy. Alternatively, if area under the ROC curve (AUC) is used as classifier performance metric, the bias term is irrelevant. 

If it is not possible to use one of these approaches, the bias term needs to be adjusted. To this end, the class means and the sample mean on the training need to be calculated and projected onto $\w$ (see definition of $b_\text{LDA}$ and $b_\text{LR}$). Fortunately, it is not required to explicitly calculate $\w$. Instead, one can determine the decision values of the cross-validated model on the training set, $\ycv_\Tr$, and then calculate $b_\text{LR}$ and $b_\text{LDA}$ directly. This is achieved by applying \prettyref{eq:update-rule} to the training data:

\begin{align}
\begin{split}
\label{eq:fixb}
\ecv_\Tr =&\ \ehat_\Tr + \H_{\Tr,\Te}\, (\I - \Hte)^{-1}\, \ehat_\Te\\
\ycv_\Tr =&\ \y_\Tr - \ecv_\Tr
\end{split}
\end{align}

Finally, the operation $\ycv_\Te \leftarrow \ycv_\Te - b_\text{LR} + b_\text{LDA}$ adjusts the bias.

\subsubsection{Summary: analytical approach}

It has been shown that in k-fold cross-validation, it suffices to train just one regression model on the whole dataset. By evaluating \prettyref{eq:update-rule} the decision values for each of the folds are obtained directly without explicitly calculating any of the K models.
\subsection{Regularisation}

Since neuroimaging data is often low-dimensional, or the number of samples is smaller than the number of features, the within-class scatter matrix tends to be ill-conditioned. Two similar regularisation approaches have been explored in the literature. They are equivalent in that they define the same family of classifiers (up to scaling of $\w$).

\subsubsection{Ridge regularisation}

A multiple of the identity matrix is added to the within-class scatter matrix \cite{Friedman1989,Tikhonov1977SolutionsProblems,Ng2004FeatureInvariance}. The regularised within-class scatter matrix is  $\S_w + \lambda \I$, 
%
%
where $\lambda\in[0,\infty]$ is the regularisation term and $\I$ is the identity matrix. $\lambda = 0$ yields the ordinary, unregularised solution. The ridge solution for $\w$ is then given by

\begin{equation}
\label{eq:w-ridge}
\w = (\S_w + \lambda \I)^{-1}\ (\mm{1} - \mm{2})
\end{equation}

In \prettyref{app:ridge} it is proven that, in the regression framework, the corresponding solution is given by

\begin{align}
\label{eq:LDA-ridge-regression-solution}
\hat{\Beta} = (\Xa^\top\Xa + \lambda \I_0)^{-1}\ \Xa^\top\, \y
\end{align}

where $\I_0\in\R^{(P+1)\times(P+1)}$ is a diagonal matrix that is identical to the identity matrix except for the last element which is 0 instead of 1. This construction assures that the bias term corresponding to the last entry is not subjected to regularisation.

Analogous to \prettyref{eq:update-scatter}, the regularised scatter matrix for the training data can be obtained as an update on the full model:

\begin{align*}
\Xa_\Tr^\top\,\Xa_\Tr + \lambda\I_0 = \underbrace{\Xa^\top\,\Xa + \lambda\I_0}_{\text{full scatter}} - \underbrace{\Xate^\top\,\Xate}_{\text{update}}
\end{align*}

After redefining $\S := (\Xa^\top\,\Xa + \lambda\I_0)^{-1}$ and correspondingly including the regularisation term in the hat matrix $\H = \Xa\ (\Xa^\top\Xa + \lambda\I_0)^{-1}\ \Xa^\top$, the analytical approach is identical to \prettyref{eq:update-rule}.

\subsubsection{Shrinkage regularisation}

The within-class scatter matrix is replaced by a convex combination of the empirical covariance matrix and a scaled identity matrix, $(1-\lambda)\ \S_w + \lambda\nu \I$, where $\nu = \text{trace}(\S_w)/P$ is a scaling parameter that equalises the traces of $\S_w$ and $\nu\I$, and $\lambda\in[0,1]$ \cite{Blankertz2011}. Unfortunately, shrinkage regularisation does not allow for simple low-rank updates as before. This can be seen when one inspects an update of the regularised scatter matrix

\begin{align*}
(1-\lambda)\Xa_\Tr^\top\Xa_\Tr + \lambda\nu_\Tr\I_0 = \underbrace{(1-\lambda)\Xa^\top\Xa + \lambda\nu\I_0}_{\text{full scatter}} - \underbrace{\left((1-\lambda)\Xate^\top\Xate + \lambda(\nu-\nu_\Tr)\I_0\right)}_{\text{update}}
\end{align*}

where $\nu$ is the scaling calculated on the full dataset and $\nu_\Tr$ is the scaling calculated on the training data. The problem is that it is necessary to update the regularisation term as well. This is caused by the scaling factor $\nu_\Tr$, which changes for each training set, thereby changing the amount of regularisation. This turns a low-rank update into a full rank update, precluding significant performance gains by updating.

For this reason, it is recommended to resort to ridge regularisation. If a researcher is used to work with shrinkage, the following simple relation can be used to transform a given shrinkage parameter $\lambda_\text{shrink}$ into a corresponding ridge parameter $\lambda_\text{ridge}$. Given a fixed value for $\lambda_\text{shrink}$, the goal is to find a $\lambda_\text{ridge}$ such that the regularised scatter matrices are proportional:

\begin{align*}
(1-\lambda_\text{shrink})\ \Xa^\top\,\Xa + \lambda_\text{shrink}\,\nu\,\I_0\quad  \overset{!}{\propto} \quad \Xa^\top\,\Xa + \lambda_\text{ridge}\,\I_0
\end{align*}

Obviously, this relation holds when the ridge parameter is defined as

\begin{align}
\label{eq:shrink-to-ridge-solution}
\lambda_\text{ridge} = \frac{\lambda_\text{shrink}}{1-\lambda_\text{shrink}}\ \nu
\end{align}

\subsection{Using the analytical approach for permutation testing}

The hat matrix $\H$ is invariant under class label permutations because it depends on the features alone. Consequently, it does not need to be recalculated when the class labels are permuted. Let the permuted class labels be denoted as $\yp$. If $\y$ and $\yhat$ are adjusted accordingly

\begin{align*}
\begin{split}
\y \leftarrow&\ \yp\\
\yhat \leftarrow&\  \H\,\yp
\end{split}
\end{align*}

the formulas in the previous section directly apply. They are compiled in \prettyref{alg:alg1}.

\begin{algorithm}
\caption{Fast cross-validation and permutations for binary LDA}
\label{alg:alg1}
\begin{algorithmic}
\State $\H \gets \Xa\ (\Xa^\top\Xa + \lambda\I_0)^{-1}\ \Xa^\top$
\ForAll{permutations $\sigma$}
	\State $\y \gets\yp$
	\State $\yhat \gets\H\,\yp$
    \ForAll{test sets Te}
    	\State $\ecv_\Te \gets  (\I - \Hte)^{-1}\,\ehat_\Te$
        \State $\ycv_\Te \gets  \y_\Te - \ecv_\Te$
		\State Calculate classification performance on current test set
	\EndFor
    \State Average classification performances across test sets
\EndFor
\State Output: classification performance for each permutation
\end{algorithmic}
\end{algorithm}

\subsection{Multi-class LDA}

Multi-class LDA is the generalisation of binary LDA to more than two classes. Like binary LDA, it involves a projection step and a thresholding step. In the projection step, the data is mapped onto a $(C-1)$-dimensional subspace, where $C$ is the number of classes. In the second step, a new sample is assigned to the class with the closest class centroid. LDA thus acts as a prototype classifier.
The scatter matrices are calculated as before, but now information is pooled across all classes.

\begin{equation*}
\begin{alignedat}{2}
\S_b =\ & \sum_{j\,\in\{1,2,...,C\}}n_j\,(\mm{j} -\mbar) (\mm{j} - \mbar)^\top\ \quad &&\text{(between-classes scatter)}\\
\S_w =\ & \sum_{j\,\in\{1,2,...,C\}}\sum_{i\in\mathcal{C}_j} (\x_i - \mm{j})(\x_i - \mm{j})^\top\  \quad &&\text{(within-class scatter)}\\
\end{alignedat}
\end{equation*}

Assuming that there are more features than classes, the between-classes scatter matrix $\S_b$ has rank $C-1$. Consequently, there are multiple non-trivial solutions that again can be obtained via the generalised eigenvalue problem

\begin{align}
\label{eq:LDA-eigenvalue-multiclass}
\S_b\,\W = \S_w\,\W\mathbf{\Lambda}
\end{align}

where $\mathbf{\Lambda}$ is a diagonal matrix of eigenvalues. A set of discriminant coordinates is obtained corresponding to non-zero eigenvalues of the eigenvalue problem. They are collected in a matrix $\W\in\R^{P\times(C-1)}$ and scaled such that $\W^\top\S_w\W = \I$ \cite{Bishop2007}. As for binary LDA, ridge regularisation can be applied to the within-class scatter matrix by replacing $\S_w$ by $\S_w+\lambda\I$ \cite{Friedman1989}.

\subsection{Multi-class LDA in a regression framework}

Unfortunately, multi-class LDA is not equivalent to multivariate linear regression using a class indicator matrix as response matrix \cite{Hastie2009}. Nevertheless, there is a close relationship between both approaches  \cite{Hastie1995PenalizedAnalysis,Hastie2009,Ye2007LeastAnalysis,Park2005ASolution}.
A useful characterisation is given in \citet{Hastie1995PenalizedAnalysis}. They show that multi-class LDA is equivalent to optimal scoring (OS) wherein regression is performed using a response vector with optimal numerical scores for each class. Finding the optimal scores is an optimisation problem that is solved jointly with the regression problem.
Let $\Y\in\R^{N\times C}$ be the class indicator matrix whose (i,j)-th element is defined as

\begin{equation*}
   \Y_{ij} =
   \begin{cases}
     1 & \text{if sample $i$ belongs to class $j$}\\
     0 &\text{otherwise}
   \end{cases}
\end{equation*}

Let  $\boldsymbol{\theta}\in\R^C$ be the vector containing the optimal scores. Then the response vector of optimal scores can be written as $\Y\boldsymbol{\theta}$, and the optimal scoring problem is given by

\begin{align*}
\underset{\Beta,\mathbf{\theta}}{\text{arg min}}\ ||\Xa\,\Beta - \Y\boldsymbol{\theta}||_2^2 \quad\text{(optimal scoring)}
\end{align*}

where $\Beta$ and $\boldsymbol{\theta}$ are jointly optimised. The additional constraint $N^{-1}||\Y\boldsymbol{\theta}||^2 = 1$ avoids trivial solutions.
\citet{Hastie1995PenalizedAnalysis} show that this optimisation problem can be broken up into two successive steps.\\

\textit{Step 1}: A multivariate regression is performed on the class indicator matrix $\widetilde{\B} = \text{arg min}\ ||\Xa\,\widetilde{\B} - \Y||_F^2$, 
where $||\cdot||_F$ is the Frobenius norm. The result is a matrix of regression weights $\widetilde{\B}\in\R^{(P+1)\times C}$, where each column of regression weights corresponds to the respective column of $\Y$. This yields the matrix of regression fits $\Yhat = \H\Y$. \\

\textit{Step 2}: The optimal score vector is found via an eigendecomposition of $\Yhat^\top\Y$. Let $\Th\in\R^{C\times (C-1)}$ be the eigenvectors of this decomposition, also called optimal scores, where the column corresponding to the trivial eigenvalue 0 (if $\Xa$ is centered) or 1 (if $\Xa$ is not centered) has been removed. Let $\alpha_1^2,\alpha_2^2,...,\alpha_{C-1}^2$ be the corresponding eigenvalues. Let $\B$ be the submatrix of $\widetilde{\B}$ with the last row (bias term) omitted. Then the columns of $\B\Th$ point in the same directions as the discriminant coordinates obtained in multi-class LDA but their scaling differs. To scale the discriminant coordinates, they are right-multiplied with the diagonal matrix

\begin{equation*}
   \mathbf{D} = \sqrt{N}^{-1} \begin{pmatrix} %
   \sqrt{\alpha_1^2(1-\alpha_1^2)}   	& 	0 			& \dots &	0\\
   0		   	& 	\sqrt{\alpha_2^2(1-\alpha_2^2)}  	& \dots &	0\\
   \vdots   	& 	\vdots 	& \ddots &	0\\
   0		  	& 	0 	& \dots &	\sqrt{\alpha_{C-1}^2(1-\alpha_{C-1}^2)}
   \end{pmatrix}^{-1}
\end{equation*}

Note that the normalisation $\sqrt{N}^{-1}$ does not appear in the original definition of the scaling matrix (\cite{Hastie1995PenalizedAnalysis}, p. 83) but is necessary here because the multi-class LDA has been calculated using the within-class scatter matrix. In contrast, Hastie et al. use the covariance matrix which differs by a scaling factor of $N$. As main result of their derivation, the relationship between the  discriminant coordinates $\W$ in \prettyref{eq:LDA-eigenvalue-multiclass} and the optimal scoring results is given by

\begin{equation}
\W = \B\Th\D
\label{eq:LDA-as-OS}
\end{equation}

\subsection{The analytical approach for multi-class LDA}

How can these findings be used to develop an analytical approach for multi-class LDA?
Starting from \prettyref{eq:LDA-as-OS}, a dot is used to indicate that the matrices have been estimated using the training data. Left-multiplication with the test data then yields

\begin{align*}
\begin{split}
\X_\Te\,\mathbf{\dot{W}} =&\ \X_\Te\,\Bcv\Thcv\Dcv\\
\Leftrightarrow\ \Ylda_\Te =&\ \Ycv_\Te\,\Thcv\Dcv
\end{split}
\end{align*}

where $\Ylda_\Te$ is used to denote the desired discriminant scores for the test data (obtained in step 2). This notation is necessary to differentiate them from the cross-validated regression fits $\Ycv$ (obtained in step 1).  After calculating $\Yhat = \H\Y$, $\Ycv_\Tr$ and $\Ycv_\Te$ can be obtained by applying \prettyref{eq:update-rule} and \prettyref{eq:fixb} using the matrix of estimation errors $\mathbf{\hat{E}} = \Y - \Yhat$. $\Thcv$ and $\Dcv$ are  then obtained via the eigenanalysis \texttt{eig}$(\Ycv_\Tr^\top\, \Y_\Tr / N_\Tr)$ on the training data.
Note that in practice, the augmented data matrix $\Xa$ and the regression weights $\widetilde{\B}$ can be used. The classification results are equivalent since the distance of a sample to the class centroids is unaffected by the constant shift incurred by the bias term.

Concluding, an analytical approach for step 1 of OS has been developed. There is no straightforward way to update the eigenvalue decomposition in step 2. However, the eigenanalysed matrix is of dimensions $C\times C$, so for most practical applications the computational costs are negligible. \prettyref{alg:alg-multiclass} compiles these results.

\begin{algorithm}
\caption{Fast cross-validation and permutations for multi-class LDA}
\label{alg:alg-multiclass}
\begin{algorithmic}
\State $\H \gets \Xa\ (\Xa^\top\Xa + \lambda\I_0)^{-1}\ \Xa^\top$
\ForAll{permutations $\sigma$}
	\State $\Y \gets\Y^\sigma$
	\State $\Yhat \gets\H\,\Y^\sigma$
    \ForAll{test sets Te}
    	\State (step 1)
        \State $\Ycv_\Tr \gets  \Y_\Tr - \Ecv_\Tr$
        \State $\Ycv_\Te \gets  \Y_\Te - \Ecv_\Te$
    	\State (step 2)
        \State $(\Thcv,(\alpha_1^2,\alpha_2^2,...)) \gets  \text{\texttt{eig}}(\Ycv_\Tr^\top\, \Y_\Tr / N_\Tr)$
        \State $\Ylda_\Te \gets \Ycv_\Te\,\Thcv\Dcv$
		\State Calculate classification performance on current test set
	\EndFor
    \State Average classification performances across test sets
\EndFor
\State Output: classification performance for each permutation
\end{algorithmic}
\end{algorithm}


\subsection{Computational complexity of the analytical approach}

\begin{table}
   \centering
   \begin{tabular}{lll}\hline
     Method 	&  Classes 	& Complexity \\
     \hline\\
     Standard 		& Binary 	& \bigO{KNP^2 + KP^3}\\
     analytical approach 	& Binary 	& \bigO{KN^3}\\
     Standard 		& Multi-class 	& \bigO{KNP^2 + KCP^2 + TP^3}\\
     analytical approach 	& Multi-class 	& \bigO{KN^3 C}\\
     \hline
   \end{tabular}
   \caption{Computational complexity of training LDA classifiers in a permutation testing regime. The standard approach is compared with the analytical approach presented in this paper. $K$: \#folds; $N$: \#samples; $P$: \#features; $C$: \#classes}
	\label{tab:complexity}
\end{table}

The asymptotic computational complexity for classifier validation using cross-validation is quantified in terms of floating point operations. The analytical approach developed in this paper is compared to the standard approach wherein a classifier is trained from scratch on every training set. Regularisation is not considered separately since the addition of the regularisation term inflicts negligible costs and is the same in both algorithms. For simplicity, the complexity is given in terms of the standard textbook algorithms. Speed-ups can of course be achieved using more sophisticated algorithms. The complexity calculations are summarised in \prettyref{tab:complexity}.

\subsubsection{Binary LDA}

For training a single binary LDA classifier, two class means need to be calculated which involves adding up training samples and features and dividing two times, leading to a complexity of \bigO{NP}, where $P$ is the number of features. Calculating the within-class scatter matrix requires  $N(P+P^2)$ steps, where the $P$ is for subtracting the class mean and the $P^2$ is for calculating the outer vector product (\bigO{NP^2}). Instead of then calculating the inverse of the within-class scatter matrix, one can solve the system of linear equations $\S_w\,\w = \m_1 - \m_2$ (\bigO{P^3}). Calculation of the bias $b_\text{LDA}$ requires \bigO{P}. Taken together, the complexity for training a single classifier amounts to \bigO{NP^2 + P^3}. This process is repeated $K$ times, $K$ being the number of folds.
This yields an overall complexity of \bigO{KNP^2 + KP^3} since the lower-order terms can be ignored for asymptotic complexity.

For the analytical approach based on the regression approach, the hat matrix needs to be calculated initially at a complexity of \bigO{N^2P + NP^2 + P^3}. Then \prettyref{eq:update-rule} needs to be evaluated for each training iteration  (\bigO{KN^3}). If the bias term needs to be corrected, operations at \bigO{KN^2} are required. This yields an asymptotic complexity of \bigO{KN^3} for the analytical approach.

\subsubsection{Multi-class LDA}

The asymptotic complexity for training a single multi-class LDA classifier is provided first. Calculating means for each of the $C$ classes, involves adding up training samples and features and dividing $C$ times (\bigO{NP} + \bigO{CP}).
Calculating the within-class scatter matrix is equal to the binary case (\bigO{NP^2}). Calculating the between-classes scatter matrix involves calculating $C$ outer vector products \bigO{CP^2}. The generalised eigenvalue decomposition has an overall complexity of \bigO{P^3}.  Repeating this process $K$ times yields an overall complexity of \bigO{KNP^2 + KCP^2 + KP^3}.

For the analytical approach based on the optimal scoring approach, the hat matrix needs to be calculated initially at a complexity of \bigO{N^2P + NP^2 + P^3}. Obtaining the cross-validated regression fits $\Ycv_\Tr$ and $\Ycv_\Te$ involves a complexity of \bigO{KN^3 C} each. This is followed by the calculation and eigendecomposition of $\Ycv_\Tr^\top\, \Y_\Tr$ (\bigO{KC^2 N + KN^3 C}). Finally the discriminant scores are calculated (\bigO{KC^2 N}). This yields an asymptotic complexity of \bigO{KN^3 C}.

\subsection{Simulations}

To vet the analytical approach its efficacy is compared to the standard approach using simulated data. The data is created as follows: Each class centroid is randomly placed on the surface of a unit hypersphere in feature space. A common covariance matrix is randomly sampled from a Wishart distribution. Samples are then created by randomly sampling from a multivariate normal distribution parameterised by the corresponding class centroid and the common covariance matrix.

The number of features was varied from 10 to 1000 in 40 logarithmic steps. For binary LDA, cross-validation was performed using 5 folds, 10 folds, 20 folds, and leave-one-out. Simulations were run separately for 100 and 1000 samples. Simulations were run with 10-fold cross-validation and 100, 1000, or 10,000 permutations. The number of samples and the number of features were set to either 100 or 1000. For every combination of parameters,  the simulation was repeated 20 times.

For multi-class LDA, 10-fold cross-validation was used with data being split into 5 classes or 10 classes with equal class proportions. For cross-validation, the number of samples was either 100 or 1000. For permutations, the number of features was fixed to 100 or 1000. The number of permutations was limited to 10 or 100 to keep overall  computation time tractable. For every combination of parameters, the simulation was repeated 20 times for cross-validation. For permutations, it was repeated 10 times.

Both binary and multi-class LDA update rules were compared to the vanilla approach wherein the classifier is trained on each training set and then applied to the test set. Different datasets and different folds were randomly created for each choice of parameters. However, for each of the two methods (analytical approach vs classical approach) the random seed was reset to assure equal data and equal folds.
All analyses were performed in MATLAB (Natick, USA). The \texttt{tic} and \texttt{toc} functions were used to measure the total computation time for cross-validation and permutation testing iterations. As target measure, \textit{relative effiency} was computed, defined as

\[
\text{Relative efficiency} = log_\text{10}\ \frac{\text{time(standard approach)}}{\text{time(analytical approach)}}
\]

This quantity has a simple interpretation in terms of \textit{orders of magnitude of improvement in computation time} of the analytical approach over the standard method. For instance, a value of 0 means that both methods are at parity. A value of 1 means that the analytical approach is 10 times faster than the standard method, a value of 2 means that the analytical approach is $10^2 = 100$ times faster, and so on. Simulations were run on a Thinkpad X1 Carbon with 16 GB of RAM and an Intel Core i7-6600U CPU @ 2.60GHz $\times$ 4 processor.

\subsection{EEG data}

The analytical approach was applied to a publicly available multi-modal dataset of participants watching greyscale images of faces and scrambled faces \cite{Wakeman2015ADataset}. The 16 EEG/MEG datasets with a total of 380 EEG/MEG channels were read into MATLAB using FieldTrip \cite{Oostenveld2011}. Epochs were created from -0.5 s to 1 s relative to image onset, and the pre-stimulus interval was used for baseline correction. Finally, data were downsampled to 200 Hz. The type of stimulus (face vs scrambled) was used as class label for binary LDA. For multi-class LDA, the face stimuli were further split in order to create 3 classes.
The total number of  trials varied across subjects, with 787 trials on average.

For both binary and multi-class LDA, two different analyses were conducted. In the first case, classification was performed separately for every time point across time interval  ranging from -0.5 s to 1 s. At each time point, 100 permutations were conducted with shuffled class labels and a 10-fold cross-validation in each permutation. The amplitudes in each channel and were used as features (380 features). In the second case, the post-stimulus interval was divided into successive, non-overlapping windows. The amplitudes in each channel were averaged within these windows and then all averaged amplitudes were concatenated to a single feature vector. For binary LDA, 100 ms windows were used ($10 * 380 = 3800$ features). For multi-class LDA, 200 ms windows were used ($5 * 380 = 1900$ features).

Analyses were run on the high-performance cluster at the Cardiff University Brain Research Imaging Centre (CUBRIC). Each compute node consists of 12 cores with 192 GB RAM and an Intel(R) Xeon(R) X5660 CPU running at 2.80GHz.

\subsection{Software}

MATLAB implementations of the analytical approach and the scripts reproducing the results are publicly available on GitHub (\texttt{github.com/treder/Fast-Least-Squares}). Note that parts of the code require the MVPA-Light toolbox (\texttt{github.com/treder/MVPA-Light}) to run.

\section{Results}

\subsection{Simulations}

\begin{figure}
\centering\includegraphics[width=1\linewidth]{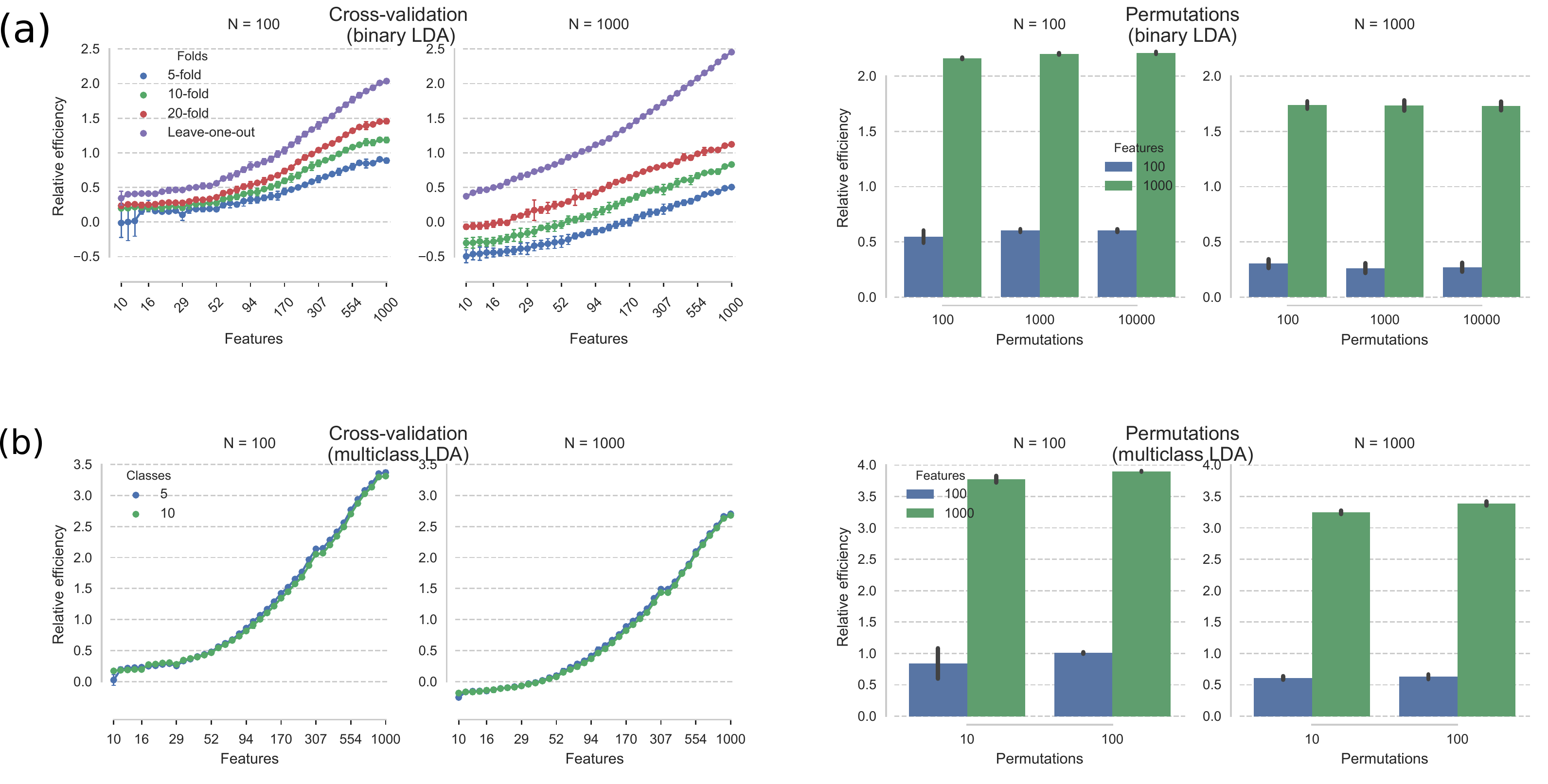}
\caption{Results of the simulations denoted in terms of relative efficiency. A relative efficiency of 1 means that the analytical approach is 10x faster than the standard approach, 2 means 100x faster, and 3 means 1000x faster. (a) Binary LDA. (b) Multi-class LDA.}
\label{fig:simulation_results}
\end{figure}

\textit{Binary LDA}. A three-way analysis of variance (ANOVA) was run on the cross-validation analysis (\prettyref{fig:simulation_results}, top left). A continuous variable (features) and two categorical variables, samples N (100 or 1000) and folds (5, 10, 20, leave-one-out), were used as predictors, and relative efficiency was used as dependent variable.  There were significant main effects of features ($F = 32051.69; p < .001$), N ($F = 1316.19; p < .001$), and folds ($F = 3119.03; p < .001$). Furthermore, the effect of features increased with folds (features $\times$ N, $F = 806.49; p < .001$), and there was an N $\times$ folds interaction ($F = 812.7; p < .001$). Furthermore, there was a three-way interaction N $\times$ folds $\times$ features ($F = 37.16; p < .001$).

A separate three-way ANOVA was performed on the permutations data (\prettyref{fig:simulation_results}, top right) using N, permutations, and features as predictors. There were significant main effects for N ($F = 6899.92; p < .001$), permutations ($F = 4.35; p = .014$), and features ($F = 111506.59; p < .001$). Significant interactions were   N $\times$ permutations  ($F = 26.52; p < .001$) and N $\times$ features ($F = 273.66; p < .001$), illustrating that the effects of permutations and features were larger for N=1000 than for N=100. Other interactions were not significant (permutations $\times$ features, $p = .58$; N $\times$ permutations $\times$ features, $p  = .08$).\\

\textit{Multi-class LDA}. A three-way analysis of variance (ANOVA) was run on the cross-validation analysis (\prettyref{fig:simulation_results}, bottom left). A continuous variable (features) and two categorical variables, samples N (100 or 1000) and classes (5, 10), were used as predictors, and relative efficiency was used as dependent variable.
There were significant main effects for N ($F = 1023.97; p < .001$), features ($F = 38270.22; p < .001$) but not for classes ($p = .15$). There was a significant 
 features $\times$ N interaction ($F = 125.74; p < .001$) signifying a smaller effect of features for larger N. The other interactions were not significant (features $\times$ classes, $p = .1$; N $\times$ classes, $p = .86$, features $\times$ N $\times$ classes, $p =.462$). 

A separate three-way ANOVA was performed on the permutations data (\prettyref{fig:simulation_results}, bottom right) using N, permutations, and features as predictors. There were significant main effects of N ($F = 366.2; p < .001$), permutations ($F = 27.4; p < .001$), and features ($F = 16970.31; p < .001$).
Again, there was a significant N $\times$ features interaction ($F =  24.93; p < .001$) signifying a smaller effect of features for larger N. The other interactions were not significant (N $\times$ permutations, $p = .13$; permutations $\times$ features, $p = .35$, N $\times$ permutations $\times$ features, $p =.06$).

\subsection{EEG/MEG data}

\begin{figure}
\centering\includegraphics[width=1\linewidth]{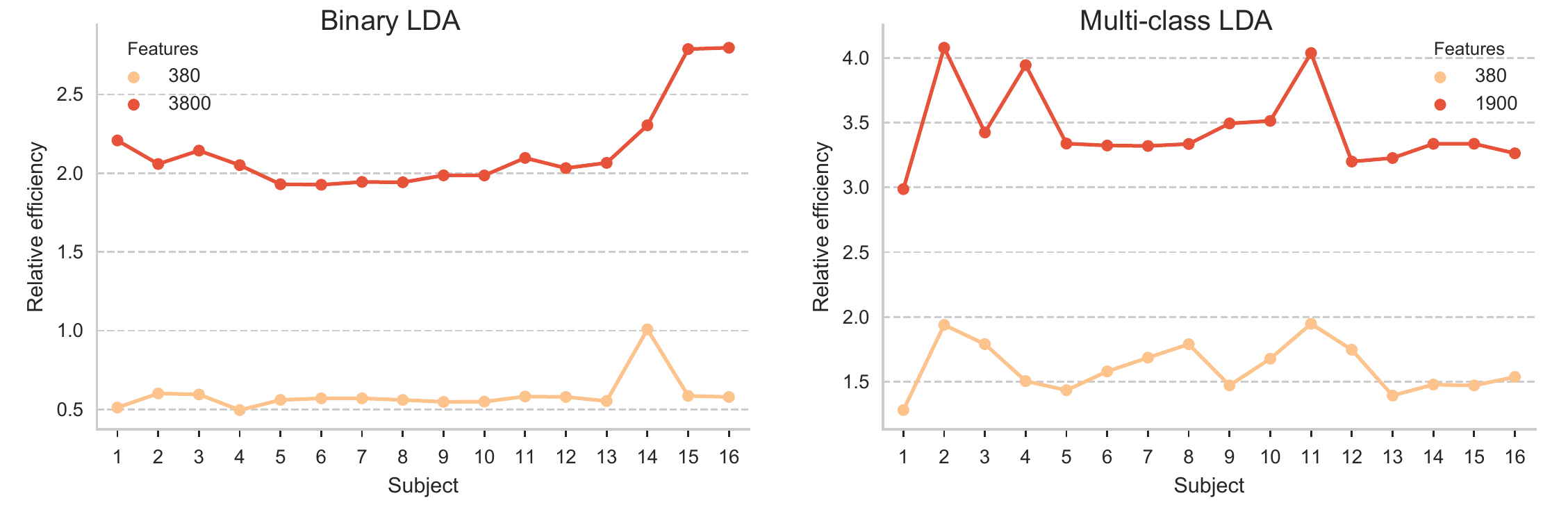}
\caption{Results of the permutation analysis of the EEG/MEG dataset using 100 permutations. Relative efficiency is plotted for each subject (x-axis), and two different numbers of features, with different panels for binary LDA and multi-class LDA.}
\label{fig:WakemanHenson}
\end{figure}

Results are depicted in \prettyref{fig:WakemanHenson}. The data for binary LDA and multi-class LDA were combined into a single two-way ANOVA model. Features (small = 380, large = 3800/1900) and type of classifier (binary LDA, multi-class LDA) were used as predictors. There were significant main effects of features ($F = 826.04; p < .001$) and classifier ($F = 388.2; p < .001$). Moreover, there was a significant features $\times$ classifier interaction ($F = 6.01; p = .017$) signifying a larger effect of features for multi-class LDA than for binary LDA.


\section{Discussion}

Due to its robustness and competitive speed, regularised LDA is an excellent candidate for classification problems involving many training-testing iterations. The analytical approach explored in this paper boosts the performance of least-squares models and multi-class LDA, particularly for high-dimensional data.

The simulations revealed a persistent speed gain using the analytical approach to cross-validation as opposed to the standard approach wherein the classifier is retrained on every training fold. Relative efficiency increases notably with the number of features. Furthermore, it increases with the number of cross-validation folds, and it decreases when the number of samples increases. Cross-validation was significantly faster, by up to 3 orders of magnitude (1000x faster) for binary LDA, and close to 4 orders of magnitude (10,000x faster) for multi-class LDA.

In line with this, the analytical approach consistently outperformed the standard approach in the EEG/MEG analysis. This was particularly prevalent in multi-class LDA, where for 1900 features, the analytical approach was between 1000x and 10,000x faster than the standard approach.

Why is relative efficiency higher for multi-class LDA than for binary LDA? A possible explanation is that multi-class LDA is more involved computationally since a generalised eigenvalue problem needs to be solved, whereas binary LDA requires only matrix inversion. Crucially, the analytical approach requires only a single matrix inversion in both  the binary and the multi-class case. This warrants a larger computational benefit for multi-class LDA.

\subsection{Is it just a trade-off between samples and features?}
It is worth noting that the analytical approach does not simply trade in the number of samples (N) for the number of features (P). If both quantities are equal, e.g. N = P = 1000, binary LDA is about 10x faster than the standard approach for 10-fold cross-validation and about 100x faster for leave-one-out. For multi-class LDA and 10-fold cross-validation, relative efficiency is close to 3 (almost 1000x faster).

The complexity calculations of the analytical approach grows cubically with the number of test samples whereas the standard method grows cubically with the number of features.
Consequently, the standard method and the analytical approach are at parity when
the number of test samples is roughly equal to the number of features, i.e. $ N / K \approx P$. One can deduce the rule of thumb that it is beneficial to use the analytical approach in cross-validation as soon as $P > N / K$. The approach becomes more efficient as K increases with the upper limit being leave-one-out (K = N).

\subsection{What is the practical use of the analytical approach?}

The analytical approach yields a significant increase in speed, but is this practically relevant for typical neuroimaging analyses? There are a number of scenarios in modern neuroimaging analyses wherein a large number of training-testing iterations is performed and hence the approach developed here can be useful:

\begin{itemize}
\item \textit{Multi-dimensional data}. Sometimes, statistical analysis is repeatedly performed along multiple data dimensions. For instance, in time-frequency data, a classifier may be validated for every combination of time point and frequency. In time generalisation, a classifier is trained and tested at every combination of time points in a trial. In searchlight analysis \cite{Kriegeskorte2006Information-basedMapping}, a classifier is validated on a local neighbourhood centered on a voxel, and this operation is repeated for all voxels.
\item \textit{Condition-rich designs}. Some experimental designs, often used in the context of Representational Similarity Analysis (RSA) \cite{Kriegeskorte2008}, feature a large number of stimulus conditions. To build the Representational Dissimilarity Matrix, distances between each pair of conditions are required. Hence with C conditions, $C(C-1)/2$ cross-validations are required for every subject. Initially RSA was based on simple Pearson correlation between samples, but more recent work has increasingly focused on classifier-based approaches, including LDA classification accuracy and LDA-related measures such as Linear Discriminant Contrast (LDC) \cite{Walther2016ReliabilityAnalysis,Diedrichsen2017RepresentationalAnalysis}. 
\item \textit{Permutation testing}. For permutation testing, a classification regime needs to be repeated thousands of times. For instance, \citet{Stelzer2013} developed a cluster test for fMRI data  that involved repeated 100 classification analyses with permuted class labels for each searchlight position and every subject. The results were passed on to the second level to perform group inference. Similarly, in \citet{Allefeld2016ValidInference}, permutations at the subject level are computed for deriving a minimum-statistic used in group inference.
\end{itemize}

\subsection{LDA vs. other least-squares approaches}

Although the analyses presented in this paper focus on LDA, all results readily extend to other least-squares methods such as linear regression and ridge regression. If the vector of class labels is replaced by a vector of continuous responses, then all equations and results apply equally. Furthermore, since multi-class LDA is closely related to Canonical Correlation Analysis (CCA) \cite{Hastie1995PenalizedAnalysis}, speed-ups for CCA might be possible using a similar approach. 

\subsection{Analytical approach vs. kernel methods}

Kernel-based methods such as Support Vector Machines \cite{Cortes1995Support-VectorNetworks} are based on a samples $\times$ samples kernel matrix $\mathbf{K}$ that can be thought of as representing pair-wise similarities between samples according to some non-linear similarity measure. The calculation of the kernel matrix is affected by the number of features, but once the matrix it is available, optimisation algorithms such as Dual Coordinate Descent for SVM \cite{Hsieh2008ASVM} directly operate on the samples dimension rather than the feature dimension. For linear SVM, the kernel matrix comprises standard dot products, $\mathbf{K}_{ij} = \x_i^\top\x_j$. There is a close relationship between the linear kernel and the hat matrix which consists of the entries $\H_{ij} = \x_i^\top(\X^\top\X + \lambda\I)^{-1}\,\x_j$. For $\lambda>0$, this quantity is positive-definite and hence a valid dot product. In other words, the hat matrix is simply a linear kernel whereby the samples have been pre-whitened with respect to the regularised scatter matrix. If the covariance of the samples is normalised and spherical, we have $\H = \mathbf{K}$.

However, kernel methods such as SVM require iterative optimisation algorithms as well as optimisation of hyperparameters. The analytical formula for least-squares methods and multi-class LDA hence yields a computational advantage in many cases.

\subsection{What about big data?}

Due to increasing levels of data sharing and large-scale studies, cognitive neuroscience is on the verge of becoming a big data science \cite{Poldrack2014MakingNeuroimaging,Turk-Browne2013FunctionalBrain,Ferguson2014BigNeuroscience,Choudhury2014BigGenomics.}. In a big data setting, both the number of samples and the number of features is extremely large. This poses a challenge to all statistical learning approaches. For least-squares models, this challenge is admittedly not resolved with the present contribution. However, the following measures can be used to cope with either too many samples or too many features.

\begin{itemize}
\item \textit{Too many samples}. The principal problem is that for a very large number of samples (e.g. $>100,000$) it might be impossible to store the hat matrix in memory. Since a kernel matrix has the same size as the hat matrix, a similar problem occurs in the optimisation of SVMs. In SVM, on-the-fly calculation of the required kernel matrix entries has been proposed as a solution \cite{Hsieh2008ASVM}. If the number of features is small enough, the  matrix $\X^\top\X$ can be stored in memory and submatrices of the hat matrix can be calculated on the fly. Furthermore, the submatrices $\I - \H_\Te$ that need to be inverted   are roughly of size N/K. Hence, one can always find a K large enough such that these matrices are small enough to be invertible efficiently.
\item \textit{Too many features}. If the number of features is too large, it is impossible to store the scatter matrix $\X^\top\X$ in memory. Random projections can offer a solution to this problem. There is evidence that if $\X\in\R^{N\times P}$ is multiplied by a sparse matrix $\mathbf{A}\in\R^{P\times Q}$ with $Q \ll P$, the covariance structure of the original data is approximately preserved in the smaller, sparsified matrix $\X\mathbf{A}\in\R^{N\times Q}$ \cite{Bingham2001RandomReduction}. This matrix can then be used instead of the scatter matrix.
\end{itemize}

An alternative approach that deals with both issues simultaneously is ensemble learning \cite{Hastie2009}, wherein a large number of statistical models called weak learners is trained in parallel. Each model uses a subset of features and a subset of samples. If these subsets are small enough, even large datasets can be digested by the ensemble. Furthermore, since each weak learner is trained independently of the others, ensemble learning can be efficiencly parallelised on compute clusters.

\subsection{Conclusion}
For least-squares methods and multi-class LDA, an analytical approach to cross-validation allows for an increase of computation speed up to several orders of magnitude. The analytical approach  enables least-squares methods and multi-class LDA to be used in high-dimensional feature spaces, particularly in the $P \gg N$ setting (many features, few samples) often encountered in neuroimaging data. Target applications in modern neuroimaging studies include multi-dimensional datasets, Representational Similarity Analysis, and permutation testing.



\appendix

\section{Relationship between linear regression and binary LDA}\label{app:app-regression-binary-fda}

The regression problem \prettyref{eq:LDA-regression-problem} leads to the normal equations

\begin{align}
\begin{split}
\label{eq:normal-equations-Xa}
\Xa^\top\Xa\,\Beta = \Xa\y
\end{split}
\end{align}

Recall that $\Xa$ is the augmented data matrix consisting of the original data $\X$ and a column of 1's. Without loss of generality, one can assume that the samples in the data matrix $\X$ have been arranged as $\X = [\X_1; \X_2]$ such that samples corresponding to class 1 come first and samples corresponding to class 2 come last. The response vector $\y\in\R^N$ contains the numerical codes for the class labels. Class 1 is represented by $z_1\in\R$, class 1 is represented by $z_2\in\R$, class 2 is represented by $z_1\neq z_2$. Accordingly, $\y$ consists of $N_1$ times $z_1$ followed by $N_2$ times $z_2$. Plugging this into \prettyref{eq:normal-equations-Xa} yields

\begin{align}
\begin{split}
\begin{bmatrix}
\X_1^\top & \X_2^\top\\
\one_{N_1}^\top & \one_{N_2}^\top
\end{bmatrix}
\begin{bmatrix}
\X_1 & \one_{N_1}\\
\X_2 & \one_{N_2}
\end{bmatrix}
\begin{bmatrix}
\w \\ b
\end{bmatrix} =
\begin{bmatrix}
\X_1^\top & \X_2^\top\\
\one_{N_1}^\top & \one_{N_2}^\top
\end{bmatrix}
\begin{bmatrix}
z_1\one_{N_1} \\
z_2\one_{N_2}
\end{bmatrix}
\end{split}
\end{align}

Multiplying the matrices and using $\X^\top\X = \S_w + N_1\m_1\m_1^\top + N_2\m_2\m_2^\top$ one obtains

\begin{align}
\begin{split}
\label{eq:normal-equations2}
\begin{bmatrix}
\S_w + N_1\m_1\m_1^\top + N_2\m_2\m_2^\top  & N\,\mbar\\
N\,\mbar^\top & N
\end{bmatrix}
\begin{bmatrix}
\w \\ b
\end{bmatrix} =
\begin{bmatrix}
N_1 z_1\m_1 + N_2 z_2\m_2\\
N_1 z_1 + N_2 z_2
\end{bmatrix}
\end{split}
\end{align}

with $\m_1, \m_2$, and $\mbar$ as defined in \prettyref{eq:means}. Solving the last row of the equation for b yields $b = N_1 z_1/ N + N_2 z_2/ N - \mbar^\top\w$. Plugging this into the first equation in \prettyref{eq:normal-equations2} yields

\begin{align*}
\begin{split}
(\S_w + N_1\m_1^2 + N_2\m_2^2 - N\,\mbar^2)\,\w = N_1\m_1 -N_2\m_2 - (N_1 z_1 + N_2 z_2)\, \mbar
\end{split}
\end{align*}

where $\m^2$ is short for $\m\m^\top$. Using $N\mbar = N_1\m_1 + N_2\m_2$ and the relation $N_1 - N_1^2/N = (N_1 N_2)/N$ one obtains

\begin{align}
\begin{split}
\label{eq:normal-equations4}
(\S_w + \underbrace{\frac{N_1 N_2}{N}\,(\m_1-\m_2)(\m_1-\m_2)^\top}_{\S_b})\,\w = \frac{2N_1 N_2 \,(z_1-z_2)}{N}(\m_1 - \m_2)
\end{split}
\end{align}

The vector $\S_b\,\w$ is a multiple of $(\m_1-\m_2)$, hence there exists $\alpha\in\R$ such that

\begin{align}
\begin{split}
\label{eq:normal-equations-alpha}
\S_b\,\w = (\frac{2N_1 N_2\,(z_1-z_2)}{N} - \alpha)\ (\m_1 - \m_2)
\end{split}
\end{align}

Inserting \prettyref{eq:normal-equations-alpha}
 in \prettyref{eq:normal-equations4} and assuming that $\S_w$ is regular yields

\begin{align*}
\begin{split}
\S_w\w = \alpha\ (\m_1 - \m_2) \Leftrightarrow \w = \alpha\ \S_w^{-1}(\m_1 - \m_2)
\end{split}
\end{align*}

This proves that $\w$ in the linear regression approach is (up to scaling) identical to the LDA solution. Furthermore, the exact numerical codes $z_1$ and $z_2$ for the classes determine $b$ and the scaling of $\w$, but they do not affect the direction of $\w$.

\section{Ridge regularisation for binary LDA}\label{app:ridge}

In this section, the correspondence between the regularised LDA in \prettyref{eq:w-ridge} and ridge regression solution in \prettyref{eq:LDA-ridge-regression-solution} is established. To simplify the math, it is assumed that in $\y$, class 1 is coded as $+1$ and class 2 is coded as $-1$. The assertion is that regularised LDA can be cast in a least-squares framework using the normal equations

\begin{align}
\begin{split}
\label{eq:ridge-normal-equations-Xa}
(\Xa^\top\Xa + \lambda\I_0)\,\Beta = \Xa\y
\end{split}
\end{align}

where $\I_0$ is defined like in \prettyref{eq:LDA-ridge-regression-solution}. Following the derivation in the previous section, one arrives at

\begin{align}
\begin{split}
\label{eq:ridge-normal-equations1}
\left(
\begin{bmatrix}
\X_1^\top & \X_2^\top\\
\one_{N_1}^\top & \one_{N_2}^\top
\end{bmatrix}
\begin{bmatrix}
\X_1 & \one_{N_1}\\
\X_2 & \one_{N_2}
\end{bmatrix} + \lambda\I_0
\right)
\begin{bmatrix}
\w \\ b
\end{bmatrix} =
\begin{bmatrix}
\X_1^\top & \X_2^\top\\
\one_{N_1}^\top & \one_{N_2}^\top
\end{bmatrix}
\begin{bmatrix}
\one_{N_1} \\
-\one_{N_2}
\end{bmatrix}
\end{split}
\end{align}

and finally

\begin{align}
\begin{split}
\label{eq:ridge-normal-equations2}
\begin{bmatrix}
(\S_w + \lambda\I) + N_1\m_1\m_1^\top + N_2\m_2\m_2^\top  & N\,\mbar\\
N\,\mbar^\top & N
\end{bmatrix}
\begin{bmatrix}
\w \\ b
\end{bmatrix} =
\begin{bmatrix}
N_1\m_1 - N_2\m_2\\
N_1 - N_2
\end{bmatrix}
\end{split}
\end{align}
The rest of the proof follows the approach in the previous section, with $\S_w$ being replaced by $\S_w + \lambda\I$. This proves the normal equations in \prettyref{eq:ridge-normal-equations-Xa} correspond to regularised LDA.

\section{Proof of lemma}\label{app:proofs}

\begin{lemma}\label{lem:evproblem}
Let $\S_b\,\w = \lambda\,\S_w\,\w$ be the generalised eigenvalue problem associated with a binary classification problem with unequal class means ($\m_1\neq\m_2$) and let $\S_w$ be positive definite. Then there is one non-zero eigenvalue $\lambda = N_1\,N_2/N\ (\m_1-\m_2)^\top\S_w^{-1} (\m_1-\m_2) > 0$. The associated eigenvector is proportional to $\S_w^{-1}(\m_1-\m_2)$.
\end{lemma}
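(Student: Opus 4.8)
The plan is to reduce the generalised eigenvalue problem to a standard one and then exploit the rank-one structure of $\S_b$ that holds in the binary case. Since $\S_w$ is positive definite it is invertible, so $\S_b\,\w = \lambda\,\S_w\,\w$ is equivalent to $\S_w^{-1}\S_b\,\w = \lambda\,\w$. By \prettyref{eq:Sb-simple}, $\S_b = \tfrac{N_1 N_2}{N}(\m_1-\m_2)(\m_1-\m_2)^\top$, hence
\[
\S_w^{-1}\S_b = \frac{N_1 N_2}{N}\,\big(\S_w^{-1}(\m_1-\m_2)\big)(\m_1-\m_2)^\top =: \mathbf{u}\,\mathbf{v}^\top ,
\]
with $\mathbf{u} = \tfrac{N_1 N_2}{N}\S_w^{-1}(\m_1-\m_2)$ and $\mathbf{v} = \m_1-\m_2$. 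This matrix has rank one because $\m_1\neq\m_2$ makes $\mathbf{v}\neq 0$ and $\S_w^{-1}$ is nonsingular.

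Next I would read off the spectrum of $\mathbf{u}\mathbf{v}^\top$. Applying it to $\mathbf{u}$ gives $\mathbf{u}\,(\mathbf{v}^\top\mathbf{u}) = (\mathbf{v}^\top\mathbf{u})\,\mathbf{u}$, so $\w \propto \mathbf{u} \propto \S_w^{-1}(\m_1-\m_2)$ is an eigenvector, with eigenvalue
\[
\lambda = \mathbf{v}^\top\mathbf{u} = \frac{N_1 N_2}{N}\,(\m_1-\m_2)^\top\S_w^{-1}(\m_1-\m_2).
\]
Conversely, any $\w$ with $(\m_1-\m_2)^\top\w = 0$ satisfies $\S_b\w = 0$, hence $\S_w^{-1}\S_b\w = 0$, so its eigenvalue is $0$; these vectors span a $(P-1)$-dimensional subspace, which together with the one-dimensional span of $\mathbf{u}$ accounts for all $P$ eigenvalues. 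This simultaneously pins down the value of the unique nonzero eigenvalue and the direction of its eigenvector.

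Finally I would verify positivity. Positive definiteness of $\S_w$ implies positive definiteness of $\S_w^{-1}$, and since $\m_1\neq\m_2$ the vector $\m_1-\m_2$ is nonzero, so the quadratic form $(\m_1-\m_2)^\top\S_w^{-1}(\m_1-\m_2)$ is strictly positive; combined with $N_1,N_2,N>0$ this yields $\lambda>0$, completing the proof.

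I do not expect a genuine obstacle here. The only point needing mild care is that $\S_w^{-1}\S_b$ is not symmetric, so the eigenvalue bookkeeping should be justified through the rank-one structure as above — or, equivalently, by conjugating with a Cholesky factor $\S_w = \mathbf{L}\mathbf{L}^\top$ to obtain the symmetric positive-semidefinite rank-one matrix $\mathbf{L}^{-1}\S_b\mathbf{L}^{-\top}$ and mapping its eigenvectors back via $\w = \mathbf{L}^{-\top}\mathbf{z}$. The rank-one route is the cleaner of the two and is the one I would write up.
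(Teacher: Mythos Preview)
Your proof is correct and follows essentially the same route as the paper: reduce the generalised eigenvalue problem to $\S_w^{-1}\S_b\w=\lambda\w$, exploit the rank-one form of $\S_b$ from \prettyref{eq:Sb-simple} to read off the single nonzero eigenvalue and its eigenvector, and invoke positive definiteness of $\S_w^{-1}$ for $\lambda>0$. Your write-up is slightly more explicit about the null space and mentions the Cholesky alternative, but the core argument is the same.
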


\begin{proof}
Define $\Delta := \m_1 - \m_2$ and $\w := \S_w^{-1}\Delta$. Since $\S_w$ is regular, the generalised eigenvalue problem can be written as an ordinary eigenvalue problem  $\S_w^{-1}\S_b\,\w = \lambda\,\w$. Then using \prettyref{eq:Sb-simple} for $\S_b$ one obtains

\[
\S_w^{-1}\S_b\w = \S_w^{-1}(\frac{N_1\,N_2}{N}\Delta\Delta^\top)\w = \S_w^{-1}\Delta(\underbrace{\frac{N_1\,N_2}{N}\,\Delta^\top\S_w^{-1}\Delta}_{:=\lambda}) = \lambda\w ,
\]

hence $\w$ is an eigenvector of $\S_w^{-1}\S_b$ with eigenvalue $\lambda$. Since $\S_w^{-1}$ is positive definite, $\lambda > 0$. Since $\S_w^{-1}\S_b$ is of rank 1, all other eigenvalues are zero.
\end{proof}








\section*{Acknowledgements}

I would like to thank Richard Henson for helpful comments.



\section*{References}

\bibliographystyle{model1-num-names}
\bibliography{Mendeley.bib}







\end{document}